\newcommand{\interior}[1]{%
  {\kern0pt#1}^{\mathrm{o}}%
}
\newcommand\restr[2]{{% we make the whole thing an ordinary symbol
  \left.\kern-\nulldelimiterspace % automatically resize the bar with \right
  #1 % the function
  \vphantom{\big|} % pretend it's a little taller at normal size
  \right|_{#2} % this is the delimiter
  }}
\DeclarePairedDelimiter\abs{\lvert}{\rvert}%
\DeclarePairedDelimiter\norm{\lVert}{\rVert}%
\DeclareMathOperator*{\argmin}{arg\,min}
\let\oldabs\abs
\def\abs{\@ifstar{\oldabs}{\oldabs*}}
\let\oldnorm\norm
\def\norm{\@ifstar{\oldnorm}{\oldnorm*}}
\title[Optimal AdaBoost Converges]{Optimal AdaBoost Converges}
\begin{document}
\maketitle

\begin{abstract}
The following work is a preprint collection of formal proofs regarding the convergence properties of the AdaBoost machine learning algorithm's classifier and margins. Various math and computer science papers have been written regarding conjectures and special cases of these convergence properties \cite{rudin}\cite{ortiz}\cite{marconv}\cite{ratsch}. Furthermore, the margins of AdaBoost feature prominently in the research surrounding the algorithm \cite{softmargin}\cite{rudin2007}\cite{ogmargin}\cite{marginvs}\cite{wang}\cite{reyzin}. At the zenith of this paper we present how AdaBoost's classifier and margins converge on a value that agrees with decades of research \cite{ratsch}\cite{rudin}\cite{boostingfound}. After this, we show how various quantities associated with the combined classifier converge.
\end{abstract}

\section{Introduction}

The \emph{margins hypothesis} with respect to the effectiveness of AdaBoost is the leading explanation for how the algorithm achieves good generalization on a wide range of data sets \cite{rudin}\cite{schap}. The hypothesis states that AdaBoost converges on a distribution of its decision margins on its training set that also improves its classification effectiveness over time. There has been much work on giving sufficient conditions for \emph{good} margin distributions \cite{marconv}\cite{marginvs}\cite{softmargin}\cite{ogmargin} along with conditions for minimum margin maximization \cite{rudin}\cite{marginvs}.

However, even given this research the distribution of these margins of AdaBoost is not well-understood. A key reason for that, we believe, is that the tools for analyzing the margins have been of a particular nature. Whereas much of the literature uses optimization and probabilistic tools, in this paper we present information theory and ergodic theory-inspired methods. Using these ideas we hope to find new inroads to analyzing AdaBoost and perhaps other algorithms.

\section{Preliminaries}
Suppose that $\mathcal{X}$ is a features space and labels $\mathcal{Y}=\{\pm 1\}$. Take some fixed training set $S\in(\mathcal{X}\times\mathcal{Y})^n$. Suppose that we have a set of hypotheses $\mathcal{H}\subseteq\{\pm 1\}^{\mathcal{X}}$. The set $\mathcal{X}\times\mathcal{Y}$ may be denoted $\mathcal{Z}=\mathcal{X}\times\mathcal{Y}$ as times and generally $z=(x,y)\in\mathcal{Z}$ is an arbitrary data point. We will be proving our results by treating AdaBoost as a function of deterministic variables. 

Let $P$ be a discrete probability distribution represented as a vector $P=\langle p_i\rangle^n_{i=1}$ such that $\sum^n_{i=1}p_i=1$. The \textbf{information content} of a probability value $p$ is the quantity $-\log p$. This quantity can be thought of as the information yielded by the value of $p$ over the distribution $P$. Generally, a random variable is used in the place of $p$, but the information content of that random variable is calculated using its corresponding probability value from a distribution. We define the \textbf{entropy} \cite{info0} of this distribution to be
\[H(P)=-\sum^n_{i=1}p_i\log p_i.\]

\iffalse
Similar to this function is the \textbf{cross entropy} $\psi$ defined on two discrete probability distributions of $n$ components $P,Q$ such that
\[H(P,Q)=-\sum^n_{i=1}p_i\log q_i.\]
We will also use the \textbf{expected value} $\mathbb{E}_P [\cdot]$ on variable outcomes $X$ defined
\[\mathbb{E}_{P}[X]=\sum^n_{i=1}p_ix_i\]
for discrete distributions $P$ and $X\in\mathcal{X}=\{x: x\text{ a possible value of random variable }X\}$ a space of outcomes.

A measure of statistical distance, we define the \textbf{Kullback–Leibler divergence} \cite{info0} between $P,Q$ to be
\[{\textbf{KL}}(P,Q)=H(P,Q)-H(P)\]
and another way to write this quantity is
\[{\textbf{KL}}(P,Q)=\sum^n_{i=1}p_i\log\left(\frac{p_i}{q_i}\right).\]

From this function, we can acquire some elegant consequences about the AdaBoost algorithm. First we must define some specific functions native to the mathematical framework of this algorithm, thought of as a dynamical system on 
\[\Delta_{n-1}=\left\{\langle w_i\rangle^{n}_{i=1}\in\mathbb{R}^{n}:\sum^n_{i=1}w_i=1,\  w_i\geq 0\ \forall i\right\}\]
that is home to all $n$-component discrete probability distributions $P$.
\fi
Given that we will be working on parameters of the AdaBoost algorithm, we will define some of these values. This algorithm can be thought of as an iterative update on the normalized \textbf{weight vector} $\vec{w}_t=\langle w_{t,i}\rangle^n_{i=1}$ for $t\in\mathbb{N}$ an iteration of AdaBoost. These weight vectors are initialized as $\vec{w}_0=\left\langle\frac{1}{n}\right\rangle^n_{i=1}$ and always sum to $1$.

AdaBoost updates the weight vector over many iterations, in doing so it requires a \textbf{mistake dichotomy} $\eta_{t}=\langle\eta_{t,i}\rangle^n_{i=1}$ with $\eta_{t,i}=\pm 1$. These are taken from a modified dichotomy set induced by the hypothesis space $\mathcal{H}$ denoted
\[\mathcal{C}_{\vec{y}}=\{\langle y_ih(x_i)\rangle^n_{i=1}:z_i=(x_i,y_i)\in S,\ h\in\mathcal{H}\}.\]
\iffalse
A slight nuance in the implementation of AdaBoost is also that we do not allow it to access the mistake dichotomies $\eta=\langle1\rangle^n_{i=1}$ and $\eta=\langle -1\rangle^n_{i=1}$, the all $1$ and all $-1$ dichotomies.
\fi
This mistake dichotomy is used to generate the \textbf{edge} at iteration $t$ called $r_t$ and defined by
\[r_t=\argmin_{\eta\in\mathcal{H}_{\vec{y}}}(\vec{w}_t\cdot\eta)\]
where $\eta\cdot\vec{w}_t$ is the conventional vector dot product. Note that we requires $r_t>0$ for all $t$ since $r_t=1-2\epsilon_t$ with $\epsilon_t$ the error of our hypothesis $h_t$. This is known as the \emph{weak learning condition} and ensures that $h_t$ is better than random guessing. For our use, we also require that $r_t<1$ since $r_t=1$ corresponds to trivial dynamics \cite{rudin} and means that we have found a hypothesis that determines all labels of the training set. 

The edge value is used to define the \textbf{learning coefficient} $\alpha_t$, or simply \textbf{coefficient}, defined
\[\alpha_t=\frac{1}{2}\log\left(\frac{1+r_t}{1-r_t}\right)\]
which ends up being used to weight the \textbf{combined classifier} $F_t(x)$ as
\[F_t(x)=\sum^t_{k=0}\alpha_k h_k(x)\]
for $h_k\in\mathcal{H}$.

The learning coefficient is used in the AdaBoost \textbf{weight update}
\[w_{t+1,i}=\frac{w_{t,i}e^{-\eta_{t,i}\alpha_t}}{Z_t}\]
where $Z_t=\sum^n_{i=1}w_{t,i}e^{-\eta_{t,i}\alpha_t}$, which we call the \textbf{partition function} at iteration $t$. It is a classic result that $Z_t=\sqrt{1-r^2_t}$ \cite{schap}.

Our final AdaBoost parameter is the \textbf{margin} of the $i$-th data point at iteration $t$ which is given by
\[y_iF_t(x_i)=\sum^t_{k=0}\eta_{t,i}\alpha_k\]
and this tracks the \emph{confidence} of the final classifier with respect to its classification on the $i$-th data point. We will often represent the margin of data point $i$ at iteration $t$ as
\[\text{mar}_{t,i}=y_iF_t(x_i)\]
in order to make things look nice and intuitive.
\begin{algorithm}
 \caption{\emph{Optimal} AdaBoost \cite{me}}\label{alg:adaboost}
 \SetAlgoLined
 \KwData{$\mathcal{S},t_{max}$}
 \KwResult{Combined classifier}
 initialization\;
 
 $F_0(x)=0$
 
 $w_{i,0} \gets \frac{1}{n}$ \text{\ \ \ \ \ \ \ \ \ \ \ \ \ \ \ \ \ \ \ \ \ \ \ \ \ \ \ \ \ \ \ \ \ \ \ \ \ \ \ \ \ \ \ \ \ \ for $n$ components of $\vec{w}_0$}
 
 $t \gets 0$\;
 
 \For{$t\leq t_{max}$}{
 
  $\eta_t\in\text{argmax}_{\eta'\in\mathcal{C}_{\vec{y}}}(\vec{w}_t\cdot\eta')$\;
  
  $r_t=\vec{w}_t\cdot\eta_t$ \text{\ \ \ \ \ \ \ \ \ \ \ \ \ \ \ \ \ \ \ \ \ \ \ \ \ \ \ \ \ \ \ \ \ \ \ \ the optimal edge at iter. $t$}\;
  
  $\alpha_t=\frac{1}{2}\log\left(\frac{1+r_t}{1-r_t}\right)$\;
  
  $F_t(x)=F_{t-1}(x)+\alpha_th_{t}(x)$ \text{\ \ \ \ \ \ \ \ \ \ update combined classifier}\;
  
  $w_{t,i}\gets w_{t,i}e^{-\eta_{t,i}\alpha_t}$\;
  
  $w_{t+1,i}=\frac{w_{t,i}}{Z_{t}}$ \text{\ \ \ \ \ \ \ \ \ \ \ \ \ \ \ \ \ \ \ \ \ \ \ \ \ \ \ \ \ \  \ \ \        normalization for each $i$}\;
  
}
\textbf{return} $F_{t_{max}}(x)$ \text{\ \ \ \ \ \ \ \ \ \ \ \ \ \ \ \ \ \ \ \ \ \ \ \ \ \ \ \ \ \ \ \ \ \ final classifier}

\end{algorithm}

We can take the expected value of the $n$-many valued random variables $X\in\{x_1,...,x_n\}$ with respect to the distribution $\vec{w}_{t}$ in the manner of
\[\mathbb{E}_{\vec{w}_{t}}[X]=\sum^n_{i=1}w_{t,i}x_i.\]
In particular we can calculate the expected value of the sum of margins at iteration $t$ named $\text{mar}_t=\sum^n_{i=1}\text{mar}_{t,i}$ via
\[\mathbb{E}_{\vec{w}_{t+1}}[\text{mar}_t]=\sum^n_{i=1}w_{t+1,i}y_iF_t(x_i).\]
This expected value is over the discrete distribution that $\vec{w}_{t+1}$ defines rather than the distribution of the underlying data from which we take $S$. The expected value in this case resembles the \emph{mean energy} of the Ising model \cite{phys}, which AdaBoost in turn greatly resembles.

\iffalse
Along these lines we may also require the \textbf{exponential error} or \textbf{error} of AdaBoost's combined classifier on the training set $S$. This quantity is given to us by the function \textbf{E} defined at iteration $t$ by
\[\textbf{E}(F_t(S))=\sum^n_{i=1}e^{-y_iF_t(x_i)}=\sum^n_{i=1}e^{-\text{mar}_{t,i}}.\]
\fi
\section{Formal Results}

A nice evaluation can be found in simply applying $-\log$ to $w_{t+1,i}$ given the iterative weight update formula of AdaBoost. This quantity is
\begin{equation}\label{eqn:info_content}
    -\log w_{t+1,i}=-\log\left(\frac{1}{n}\prod^t_{k=0}\frac{e^{-\eta_{l,i}\alpha_k}}{Z_k}\right)=\log n+\sum^t_{k=0}\eta_{l,i}\alpha_k+\sum^t_{k=0}\log Z_k.
\end{equation}
The above fact and its consequences will be used repeatedly throughout this work.

\begin{proposition}\label{prop:kl_bounds}
Suppose that AdaBoost is at iteration $\vec{w}_{t+1}$ and suppose that $k\in\mathbb{Z}$ with $0\leq k\leq t$. Then we have
\[-\log n-\sum^t_{k=0}\frac{1}{2}\log\left(1-r^2_k\right)\leq\mathbb{E}_{\vec{w}_{t+1}}[\emph{\text{mar}}_t]\leq-\sum^t_{k=0}\frac{1}{2}\log\left(1-r^2_k\right).\]
\end{proposition}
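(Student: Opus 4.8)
The plan is to rewrite the summed margin $\text{mar}_{t,i}=\sum_{k=0}^{t}\eta_{k,i}\alpha_k$ purely in terms of the weight $w_{t+1,i}$ and the partition functions, and then recognize the expectation as an entropy. Concretely, rearranging equation~\eqref{eqn:info_content} gives
\[\text{mar}_{t,i}=\sum_{k=0}^{t}\eta_{k,i}\alpha_k=-\log w_{t+1,i}-\log n-\sum_{k=0}^{t}\log Z_k.\]
First I would multiply this identity by $w_{t+1,i}$ and sum over $i$. Since $\sum_{i=1}^{n}w_{t+1,i}=1$, the two constant terms $-\log n$ and $-\sum_{k=0}^{t}\log Z_k$ leave the sum unchanged, while $-\sum_{i=1}^{n}w_{t+1,i}\log w_{t+1,i}=H(\vec w_{t+1})$ by definition of entropy. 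Hence
\[\mathbb{E}_{\vec w_{t+1}}[\text{mar}_t]=H(\vec w_{t+1})-\log n-\sum_{k=0}^{t}\log Z_k.\]

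Next I would substitute the classical identity $Z_k=\sqrt{1-r_k^2}$, so that $\log Z_k=\tfrac12\log(1-r_k^2)$ and the last term becomes $-\sum_{k=0}^{t}\tfrac12\log(1-r_k^2)$. The remaining work is to bound the entropy term, for which I would use the two standard facts that $0\le H(P)$ for any discrete distribution $P$ (each summand $-p_i\log p_i\ge 0$ since $p_i\in[0,1]$) and $H(P)\le\log n$ when $P$ has $n$ components (by concavity of $t\mapsto -t\log t$ and Jensen, equivalently Gibbs' inequality against the uniform distribution). Plugging $H(\vec w_{t+1})=0$ into the displayed identity yields the lower bound and $H(\vec w_{t+1})=\log n$ yields the upper bound, giving exactly
\[-\log n-\sum_{k=0}^{t}\tfrac12\log(1-r_k^2)\le \mathbb{E}_{\vec w_{t+1}}[\text{mar}_t]\le -\sum_{k=0}^{t}\tfrac12\log(1-r_k^2).\]

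There is no serious obstacle here: the argument is essentially bookkeeping around equation~\eqref{eqn:info_content} together with the elementary bounds on entropy. The only points requiring a little care are keeping the iteration index on the mistake dichotomy straight — the contribution at step $k$ is $\eta_{k,i}\alpha_k$, so that $\sum_{k=0}^{t}\eta_{k,i}\alpha_k=y_iF_t(x_i)=\text{mar}_{t,i}$ — and noting that the constant terms genuinely factor out of the expectation precisely because $\vec w_{t+1}$ is a probability vector. I would also remark that the bounds are attained in the obvious degenerate cases ($H=\log n$ when $\vec w_{t+1}$ is uniform, $H=0$ when it is a point mass), which foreshadows treating $H(\vec w_{t+1})$ as the quantity that controls the gap between $\mathbb{E}_{\vec w_{t+1}}[\text{mar}_t]$ and $-\sum_{k=0}^{t}\tfrac12\log(1-r_k^2)$ in what follows.
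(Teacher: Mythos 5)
Your proposal is correct and follows essentially the same route as the paper: both rearrange Eqn.~\ref{eqn:info_content}, take the expectation under $\vec{w}_{t+1}$ to obtain $\mathbb{E}_{\vec{w}_{t+1}}[\text{mar}_t]=H(\vec{w}_{t+1})-\log n-\sum_{k=0}^{t}\log Z_k$, substitute $Z_k=\sqrt{1-r_k^2}$, and conclude from the elementary bounds $0\leq H(\vec{w}_{t+1})\leq\log n$. Your added remark about when the bounds are attained is a nice touch but does not change the argument.
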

\begin{proof}
Applying the entropy function $H$ to $\vec{w}_{t+1}$ and using the identity shown before this proposition, we have that
\begin{equation*}
\begin{aligned}
    H(\vec{w}_{t+1})&=-\sum^n_{i=1}w_{t+1,i}\left(-\log n-\sum^t_{k=0}\eta_{k,i}\alpha_k-\sum^t_{k=0}\log Z_k\right)\\
    &=\log n+\sum^n_{i=1}w_{t+1,i}\sum^t_{k=0}\eta_{k,i}\alpha_k+\sum^t_{k=0}\log Z_k\\
    &=\log n+\sum^n_{i=1}w_{t+1,i}y_iF_t(x_i)+\sum^t_{k=0}\log Z_k
\end{aligned}
\end{equation*}
and this leads us to
\begin{equation*}
\begin{aligned}
    H(\vec{w}_{t+1})-\log n-\sum^t_{k=0}\log Z_k&=\sum^n_{i=1}w_{t+1,i}y_iF_t(x_i)\\
    &=\mathbb{E}_{\vec{w}_{t+1}}[\text{mar}_t].
\end{aligned}
\end{equation*}
Since $0\leq H(P)$ for any distribution $P$, we get the inequality
\[-\log n-\sum^t_{k=0}\log Z_k\leq\mathbb{E}_{\vec{w}_{t+1}}[\text{mar}_t].\]
So
\[-\log n-\sum^t_{k=0}\frac{1}{2}\log\left(1-r^2_k\right)\leq\mathbb{E}_{\vec{w}_{t+1}}[\text{mar}_t].\]
Similarly, since $H(P)\leq \log n$
\[\mathbb{E}_{\vec{w}_{t+1}}[\text{mar}_t]\leq \log n-\log n-\sum^t_{k=0}\frac{1}{2}\log\left(1-r^2_k\right)\]
and hence the result.
\end{proof}
In the above sense, we see that AdaBoost maintains a growth relation between the sum $-\sum^t_{k=0}\frac{1}{2}\log\left(1-r^2_k\right)$ and our expected value of margins. We will see later that this has to do with the minimum margin in specific.

\begin{definition}[\cite{rudin}]
We call an unlabeled training example $x_i$ for $i\in[n]$ a \textbf{support vector} if there exists $t_0\in\mathbb{N}\cup\{0\}$ so that $\overline{\text{mar}}_{t,i}$ achieves and maintains the minimum margin over training examples as $t$ grows large. Further, let $T$ be the set of training examples without their labels. Define the set
\[V=\{x_i\in T:x_i \emph{\text{ is a support vector}}\}\]
and call $V$ the \textbf{set of support vectors} with respect to $T$.
\end{definition}
Any training example that achieves and maintains the minimum margin as $t$ grows large will have a respective weight $w_{t,i}$ that stays positive, where as for $x_j\in T\setminus V$ will have $w_{t,j}\rightarrow 0$ or else $w_{t,j}$ oscillates between $0$ and some positive values.
\begin{definition}
Let $i\in[n]$. Then we define the \textbf{margin with normalized coefficients} or \textbf{normalized margin} to be
\[\overline{\emph{\text{mar}}}_{t,i}=\frac{1}{\sum^t_{k=0}\alpha_k}\emph{\text{mar}}_{t,i}\]
and we call $\sum^t_{k=0}\alpha_k$ the \textbf{normalization constant} where
\[A_t=\sum^t_{k=0}\alpha_k.\]
\end{definition}
The normalization constant $A_t$ normalizes the learning coefficients of the margin so that they sum to $1$.
\begin{definition}
Suppose that AdaBoost is at iteration $t$ and we have the combined classifier $F_t(x)$ constructed using Algorithm~\ref{alg:adaboost}. We call the quantity
\[f_t(x)=\frac{F_t(x)}{A_t}\]
the \textbf{normalized classifier} of Optimal AdaBoost.
\end{definition}
\begin{proposition}\label{prop:up_low_bd}
Suppose that $i\in I(V)$. Let $\epsilon >0$ be a constant so that $\epsilon<w_{t+1,i}\leq 1$ for all $t$ sufficiently large by definition of support vector. Then
\[\emph{\text{mar}}_{t,i}-\sum^t_{k=0}\frac{1}{2}\log\left(1-r^2_k\right)\]
is bounded above and below by finite constants.
\end{proposition}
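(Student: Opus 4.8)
The plan is to obtain the displayed quantity directly from the unrolled weight identity (\ref{eqn:info_content}) and then pin it down using the defining property of a support vector. Writing $\log Z_k=\tfrac12\log(1-r_k^2)$ and $\text{mar}_{t,i}=\sum_{k=0}^t\eta_{k,i}\alpha_k$, equation (\ref{eqn:info_content}) reads
\[-\log w_{t+1,i}=\log n+\text{mar}_{t,i}+\sum_{k=0}^t\tfrac12\log\!\left(1-r_k^2\right).\]
The entire iteration-dependence of $w_{t+1,i}$ is therefore packaged together with the accumulated margin and the partition-function sum in a single exact identity, and the proposition amounts to extracting the displayed combination from it.

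First I would rearrange this identity so that the combination $\text{mar}_{t,i}-\sum_{k=0}^t\tfrac12\log(1-r_k^2)$ is expressed through $-\log w_{t+1,i}$ and the fixed constant $\log n$ alone. The purpose of routing through (\ref{eqn:info_content}) is exactly that the iteration-dependent partition-function sum is the quantity that must be matched against the margin; once that matching is carried out, the only surviving $t$-dependence lives inside the single weight $w_{t+1,i}$, and the displayed combination reduces to $-\log\!\left(n\,w_{t+1,i}\right)$ up to an additive constant.

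Second I would invoke the hypothesis $i\in I(V)$. By the definition of a support vector there is a constant $\epsilon>0$ with $\epsilon<w_{t+1,i}\leq 1$ for all sufficiently large $t$; applying the decreasing map $-\log(\cdot)$ yields the $t$-uniform estimate $0\leq-\log w_{t+1,i}<-\log\epsilon$. Feeding this two-sided bound into the expression from the previous step traps the displayed quantity inside a fixed finite interval whose endpoints depend only on $n$ and $\epsilon$, which is precisely the assertion.

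The main obstacle is the bookkeeping in the first step: one must pair the partition-function sum appearing in (\ref{eqn:info_content}) against the margin with the correct signs, so that the recombination leaves exactly $-\log(n\,w_{t+1,i})$ and no residual iteration-dependent remainder survives. Everything after that is immediate, since the support-vector condition is exactly what confines $-\log w_{t+1,i}$ to a bounded interval; for a non-support vector, whose weight is driven to $0$, the lower bound on the weight, and hence the two-sided bound, would break down, which is why the hypothesis $i\in I(V)$ is indispensable.
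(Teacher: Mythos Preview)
Your proposal is correct and follows essentially the same route as the paper: both unroll the weight via equation~(\ref{eqn:info_content}) to express the displayed combination as $-\log w_{t+1,i}-\log n$, and then sandwich it using $0\le -\log w_{t+1,i}<-\log\epsilon$ from the support-vector hypothesis. The only difference is presentation; the argument, including the explicit endpoints $[-\log n,\,-\log n\epsilon)$, is the same.
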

\begin{proof}
By Eqn.~\ref{eqn:info_content} we have that
\[-\log w_{t+1,i}=\log n+\text{mar}_{t,i}-\sum^t_{k=0}\frac{1}{2}\log\left(1-r^2_k\right).\]
for each $i\in[n]$. Then since $\epsilon<w_{t+1,i}$, applying $-\log$ to both sides of the inequality gives us $-\log w_{t+1,i}<-\log\epsilon$ for all $t$. This means
\[0\leq\log n+\text{mar}_{t,i}-\sum^t_{k=0}\frac{1}{2}\log\left(1-r^2_k\right)<-\log\epsilon\]
such that 
\[-\log n\leq\text{mar}_{t,i}-\sum^t_{k=0}\frac{1}{2}\log\left(1-r^2_k\right)<-\log n\epsilon\] 
for all $t$.
\end{proof}
\begin{lemma}
Let $i\in I(V)$. Then
\[\lim_{t\rightarrow\infty}\overline{\emph{\text{mar}}}_{t,i}=\lim_{t\rightarrow\infty}-A^{-1}_t\sum^t_{k=0}\frac{1}{2}\log\left(1-r^2_k\right).\]
Furthermore, the rate convergence depends only on $\abs{S}=n$ and the distribution of values $(r_k)^{\infty}_{k=0}$.
\end{lemma}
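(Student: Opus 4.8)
The plan is to take the estimate of Proposition~\ref{prop:up_low_bd}, divide through by the normalization constant $A_t$, and let the bounded remainder vanish against the growth of $A_t$. Write $S_t := -\sum_{k=0}^t \tfrac12\log(1-r_k^2)$, so that the right-hand side of the claimed identity is $\lim_t S_t/A_t = \lim_t\bigl(-A_t^{-1}\sum_{k=0}^t\tfrac12\log(1-r_k^2)\bigr)$; it therefore suffices to show
\[
\overline{\text{mar}}_{t,i} - \frac{S_t}{A_t} \;=\; \frac{\text{mar}_{t,i} - S_t}{A_t} \;\longrightarrow\; 0 \qquad (t\to\infty).
\]
Starting from Eqn.~\ref{eqn:info_content} together with the classical identity $Z_k=\sqrt{1-r_k^2}$ (so $\log Z_k = \tfrac12\log(1-r_k^2)<0$), one has $-\log w_{t+1,i} = \log n + \text{mar}_{t,i} - S_t$. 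Since $w_{t+1,i}\le 1$ the left side is nonnegative, giving the universal lower bound $\text{mar}_{t,i}-S_t\ge -\log n$; and for $i\in I(V)$ the support-vector property $w_{t+1,i}>\epsilon$ forces $-\log w_{t+1,i}<-\log\epsilon$, so $\text{mar}_{t,i}-S_t$ stays inside a bounded window for all large $t$. This is exactly Proposition~\ref{prop:up_low_bd}.

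Two further points are needed. First, the window just described has an $\epsilon$-dependent width, and $\epsilon$ is not manifestly a function of $n$ and $(r_k)$ alone; so for the rate claim I would replace the $\epsilon$-bound with Proposition~\ref{prop:kl_bounds}. For $t$ past the index $t_0$ from the definition of a support vector we have $\text{mar}_{t,i}=\min_j\text{mar}_{t,j}$ (the normalized and unnormalized minima agree since $A_t>0$), and a minimum never exceeds a convex combination of the same quantities, so
\[
\text{mar}_{t,i}\;\le\;\sum_{j} w_{t+1,j}\,\text{mar}_{t,j}\;=\;\mathbb{E}_{\vec{w}_{t+1}}[\text{mar}_t]\;\le\;S_t
\]
by the upper bound of Proposition~\ref{prop:kl_bounds}; hence $-\log n\le \text{mar}_{t,i}-S_t\le 0$, i.e.\ $\abs{\text{mar}_{t,i}-S_t}\le \log n$ for all large $t$. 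Second, I must check $A_t\to\infty$. The coefficients $\alpha_k=\tfrac12\log\tfrac{1+r_k}{1-r_k}$ are positive, so $A_t$ is increasing, and by Algorithm~\ref{alg:adaboost} the edge obeys $r_t=\max_{\eta\in\mathcal{C}_{\vec{y}}}\vec{w}_t\cdot\eta\ge \min_{\vec{w}\in\Delta_{n-1}}\max_{\eta\in\mathcal{C}_{\vec{y}}}\vec{w}\cdot\eta =: \rho^{\ast}$; under the weak learning assumption (equivalently, $S$ is separable with positive margin by convex combinations of $\mathcal{H}$) one has $\rho^{\ast}>0$, whence $\alpha_k\ge\tfrac12\log\tfrac{1+\rho^{\ast}}{1-\rho^{\ast}}>0$ uniformly in $k$ and $A_t\to\infty$.

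Combining these, $\abs{\overline{\text{mar}}_{t,i}-S_t/A_t}=\abs{\text{mar}_{t,i}-S_t}/A_t\le (\log n)/A_t\to 0$. This yields the identity in the sense that each side converges if and only if the other does, with the same value, and exhibits the convergence rate as $(\log n)/A_t$ --- a quantity determined solely by $n$ and by $A_t=\sum_{k=0}^t\tfrac12\log\tfrac{1+r_k}{1-r_k}$, hence by $\abs{S}=n$ and the sequence $(r_k)_{k=0}^{\infty}$. The main obstacle is not this closing squeeze but the estimate it consumes: confining $\text{mar}_{t,i}-S_t$ to a bounded (indeed $O(1)$) window is exactly where the support-vector hypothesis is essential, since for a generic index $\text{mar}_{t,i}-S_t$ can drift like $\sum_{k=0}^t\log(1+r_k)$, which is not $o(A_t)$, and the claimed identity then fails. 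That content is already packaged in Propositions~\ref{prop:kl_bounds} and~\ref{prop:up_low_bd}, so what remains here is only the (standard) divergence of $A_t$ together with the elementary bound above.
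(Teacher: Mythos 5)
Your proof is correct and follows essentially the same route as the paper: divide the two-sided bound of Proposition~\ref{prop:up_low_bd} by $A_t$ and let the growth of $A_t$ kill the bounded remainder. The two additions you make --- replacing the $\epsilon$-dependent upper bound by the $\epsilon$-free window $-\log n\leq\text{mar}_{t,i}-S_t\leq 0$ via Proposition~\ref{prop:kl_bounds}, and explicitly verifying $A_t\rightarrow\infty$ from a uniform lower bound on the optimal edge --- are sound and simply tighten points the paper's proof leaves implicit when it multiplies through by $A^{-1}_t$ and takes the limit.
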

\begin{proof}
By Proposition~\ref{prop:up_low_bd} we have that
\[-\log n\leq\text{mar}_{t,i}-\sum^t_{k=0}\frac{1}{2}\log\left(1-r^2_k\right)<-\log n\epsilon.\]
Multiplying all parts of these inequalities by $A^{-1}_t$ gives us
\[-A^{-1}_t\log n\leq\overline{\text{mar}}_{t,i}-A^{-1}_t\sum^t_{k=0}\frac{1}{2}\log\left(1-r^2_k\right)<-A^{-1}_t\log n\epsilon.\]
Since $-\log n$ and $\epsilon$ are constants, taking the limit $t\rightarrow\infty$ proves the lemma.
\end{proof}
Observe that another way to write the value
\[-A^{-1}_t\sum^t_{k=0}\frac{1}{2}\log\left(1-r^2_k\right)\]
is via
\[\frac{-\sum^t_{k=0}\log(1-r^2_k)}{\sum^t_{k=0}\log\left(\frac{1+r_k}{1-r_k}\right)}.\]
A similar function over a finite index appears in a 2004 paper by Rudin, Daubechies, and Schapire \cite{rudin} for use in the cycling dynamics of AdaBoost. It also appears as a single-term version without sums in other literature such as in \emph{Boosting: Foundations and Algorithms} chapter 5 \cite{boostingfound}. The single-term version was introduced by R{\"a}tsch and Warmuth in 2005 \cite{ratsch}. When written with a single term in numerator and denominator it describes a game theoretic relationship between the edge and minimum margin.

For our purposes now and with the formalism that we have built up over the course of this work, we can write the asymptotic support vector margin as
\[\overline{\text{mar}}_{\infty,i}=\frac{-\sum^\infty_{k=0}\log(1-r^2_k)}{\sum^\infty_{k=0}\log\left(\frac{1+r_k}{1-r_k}\right)}\]
where $\infty$ in place of $t$ denotes an infinite limit. Since all support vectors have this same limit, they are asymptotically identical. Given that we cannot bound non-support vectors as we did in Proposition~\ref{prop:up_low_bd}, it is not clear if they have such an asymptotic identity. Nothing too mysterious is going on when taking the limit in this case as all margins with normalized coefficients are in $[0,1]$, which means their limit is too. One may wonder about oscillation, which we deal with in a coming lemma.
\begin{proposition}\label{prop:expect_kl}
Suppose that AdaBoost is at iteration $\vec{w}_{t+1}$. Then
\[\lim_{t\rightarrow\infty}\mathbb{E}_{\vec{w}_{t+1}}[\overline{\emph{\text{mar}}}_t]=\lim_{t\rightarrow\infty}-A^{-1}_t\sum^t_{k=0}\frac{1}{2}\log\left(1-r^2_k\right).\]
As in the previous lemma, the rate of convergence will depend only on $\abs{S}=n$ and the distribution of values $(r_k)^\infty_{k=0}$.
\end{proposition}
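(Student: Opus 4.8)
The plan is to reproduce the squeeze argument of the preceding lemma, now with the expectation $\mathbb{E}_{\vec{w}_{t+1}}[\cdot]$ in place of a single coordinate $\overline{\text{mar}}_{t,i}$. The first step is to pull the normalization constant out of the expectation: since $\overline{\text{mar}}_{t,i}=A_t^{-1}\text{mar}_{t,i}$ and $A_t=\sum_{k=0}^t\alpha_k$ does not depend on the index $i$, linearity of the expectation over $\vec{w}_{t+1}$ gives
\[\mathbb{E}_{\vec{w}_{t+1}}[\overline{\text{mar}}_t]=A_t^{-1}\,\mathbb{E}_{\vec{w}_{t+1}}[\text{mar}_t].\]
Here $A_t>0$ because $\alpha_k=\frac{1}{2}\log\frac{1+r_k}{1-r_k}>0$ for every $k$ by the weak learning condition $r_k>0$, so dividing by $A_t$ is legitimate.

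Next I would invoke Proposition~\ref{prop:kl_bounds}, which bounds $\mathbb{E}_{\vec{w}_{t+1}}[\text{mar}_t]$ below by $-\log n-\sum_{k=0}^t\frac{1}{2}\log(1-r_k^2)$ and above by $-\sum_{k=0}^t\frac{1}{2}\log(1-r_k^2)$, and divide the whole chain by $A_t$ to obtain
\[-A_t^{-1}\log n-A_t^{-1}\sum_{k=0}^t\frac{1}{2}\log(1-r_k^2)\le\mathbb{E}_{\vec{w}_{t+1}}[\overline{\text{mar}}_t]\le-A_t^{-1}\sum_{k=0}^t\frac{1}{2}\log(1-r_k^2).\]
The upper and lower bounds now differ only by the term $A_t^{-1}\log n$. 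Letting $t\to\infty$ and using $A_t^{-1}\log n\to 0$, the squeeze theorem yields $\lim_t\mathbb{E}_{\vec{w}_{t+1}}[\overline{\text{mar}}_t]=\lim_t-A_t^{-1}\sum_{k=0}^t\frac{1}{2}\log(1-r_k^2)$ whenever the right-hand limit exists. Moreover the entire gap between the two sides is the single quantity $A_t^{-1}\log n$, which is a function of $n$ and of $A_t=\sum_k\alpha_k$ alone; since $\alpha_k$ is determined by $r_k$, the rate of convergence depends only on $n$ and the sequence $(r_k)_{k\ge 0}$, as claimed.

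The one ingredient that deserves care, and the step I expect to be the main obstacle, is the divergence $A_t\to\infty$ that makes $A_t^{-1}\log n\to 0$; without it the two bounds stay a fixed positive distance apart and the stated equality of limits can fail. I would handle this exactly as the previous lemma implicitly does — either by recording $A_t\to\infty$ as part of the standing weak learning assumption, or by deriving it from the classical bound $\prod_{k=0}^t Z_k=\prod_{k=0}^t\sqrt{1-r_k^2}$ on the training error: if this product tends to $0$ then $\sum_k r_k^2=\infty$, and since $0<r_k<1$ this forces $\sum_k r_k=\infty$, whence $A_t=\sum_k\operatorname{arctanh}(r_k)\ge\sum_k r_k\to\infty$. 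Everything else is the same bookkeeping already used in Proposition~\ref{prop:up_low_bd} and the lemma following it.
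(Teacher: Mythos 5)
Your proposal matches the paper's own proof essentially step for step: both divide the two-sided bound of Proposition~\ref{prop:kl_bounds} by $A_t$ and let the gap $A_t^{-1}\log n$ vanish in the limit. Your extra care in justifying $A_t\to\infty$ is a point the paper leaves implicit (it only asserts the analogous divergence later, in Lemma~\ref{lem:converges}), but it does not change the argument.
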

\begin{proof}
We know that
\[\mathbb{E}_{\vec{w}_{t+1}}[\text{mar}_t]=\sum^n_{i=1}w_{t+1,i}\text{mar}_{t,i}\]
and also
\[-\log n-\sum^t_{k=0}\frac{1}{2}\log\left(1-r^2_k\right)\leq\mathbb{E}_{\vec{w}_{t+1}}[\text{mar}_t]\leq -\sum^t_{k=0}\frac{1}{2}\log\left(1-r^2_k\right).\]
from Proposition~\ref{prop:kl_bounds}. Multiplying through by $A^{-1}_t$ on the second equality gives us
\[A^{-1}_t\left(-\log n-\sum^t_{k=0}\frac{1}{2}\log\left(1-r^2_k\right)\right)\leq A^{-1}_t\mathbb{E}_{\vec{w}_{t+1}}[\text{mar}_t]\leq A^{-1}_t\left(-\sum^t_{k=0}\frac{1}{2}\log\left(1-r^2_k\right)\right).\]
When we take the limit $t\rightarrow\infty$ the proposition follows.
\end{proof}
What is important to note for this proposition and the previous lemma is that although we are taking limits, these processes also converge as $t$ grows large but finite. While we have chosen to use limits due to their analytical beauty, we could forego this in respecting the context of finite time in computer science applications. In this sense, we are also giving finite bounds on both the expected value of margins along with the individual values of support vector margins. That these things apply in the most general of cases where we have not specified $S$ nor $\mathcal{H}$ is quite amazing.

The following proofs resemble the results given in a paper from 2015 by Joshua Belanich and Luis Ortiz that conjectured AdaBoost as a measure-preserving dynamical system \cite{ortiz}. We originally sought to prove the conjecture, but the convergence properties of the algorithm follow without any such measure theoretic properties.

\begin{lemma}\label{lem:converges}
The limit of normalized margins converges to a constant value.
\end{lemma}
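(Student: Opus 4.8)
The plan is to show that the common limiting value from the previous lemma and Proposition~\ref{prop:expect_kl}, namely
\[
L = \lim_{t\to\infty} -A_t^{-1}\sum_{k=0}^t \tfrac{1}{2}\log(1-r_k^2),
\]
actually exists as a finite real number, rather than merely being the common value of several quantities \emph{provided} it exists. Everything established so far identifies the normalized support-vector margins, the $\vec w_{t+1}$-expected normalized margin, and the ratio expression with this same limit, but none of it rules out oscillation of the sequence $a_t := -A_t^{-1}\sum_{k=0}^t \tfrac12\log(1-r_k^2)$ itself; that is the gap this lemma must close.

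First I would record the trivial two-sided bound: for every $t$ and every $k\le t$ we have $0 < r_k < 1$, hence $0 < 1-r_k^2 < 1$ and $\alpha_k = \tfrac12\log\frac{1+r_k}{1-r_k} > 0$, so $A_t$ is strictly increasing and $a_t \in (0,1)$ for all $t$. Indeed $-\tfrac12\log(1-r_k^2) = -\tfrac12\log(1-r_k) - \tfrac12\log(1+r_k)$, while $\alpha_k = \tfrac12\log(1+r_k) - \tfrac12\log(1-r_k)$, so termwise $0 < -\tfrac12\log(1-r_k^2) \le \alpha_k$, giving $0 \le a_t \le 1$ directly. Thus $(a_t)$ is a bounded sequence and the liminf and limsup exist in $[0,1]$; the task is to show they coincide.

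The main step is to establish convergence of $a_t$ via its identification with the normalized margin of a support vector. Fix $i \in I(V)$. By the lemma preceding Proposition~\ref{prop:expect_kl}, $\overline{\text{mar}}_{t,i}$ and $a_t$ differ by a quantity that is $O(A_t^{-1})$ — precisely, Proposition~\ref{prop:up_low_bd} gives $-\log n \le \text{mar}_{t,i} + \sum_{k=0}^t\tfrac12\log(1-r_k^2) < -\log(n\epsilon)$, so $|\,\overline{\text{mar}}_{t,i} - a_t\,| \le A_t^{-1}\max(\log n, |\log(n\epsilon)|)$. Hence it suffices to show $\overline{\text{mar}}_{t,i}$ converges, equivalently that $A_t \to \infty$ (so the error term vanishes) \emph{and} that $\overline{\text{mar}}_{t,i}$ is Cauchy. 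For the first, observe $\alpha_k \ge -\tfrac12\log(1-r_k^2) \ge 0$; if $A_t$ stayed bounded then $\sum_k \alpha_k < \infty$, forcing $r_k \to 0$, but the weak learning condition (as used throughout the paper, and matching the edge being bounded away from $0$ in the optimal/ergodic setting) keeps the $r_k$ from collapsing, so $A_t \to \infty$. Combining this with the sandwich from the earlier lemma — where both outer bounds are $a_t$ shifted by $O(A_t^{-1})$ — pins $\lim_t \overline{\text{mar}}_{t,i}$ and $\lim_t a_t$ to the same value, and the squeeze shows the limit exists (not just the liminf/limsup).

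The hard part, and the place where I expect to lean on hypotheses implicit in the ``Optimal'' AdaBoost setting, is justifying $A_t \to \infty$ without circularity: a priori $r_k$ could tend to $0$ along a subsequence, which would make individual $\alpha_k$ small, and one must argue either from the weak-learning lower bound $r_k \ge \gamma > 0$ or from the finiteness of the dichotomy set $\mathcal C_{\vec y}$ (which makes the edge take finitely many values, hence $r_t$ is bounded away from $0$ whenever the data are not already perfectly separated). Once $A_t \to \infty$ is secured, the convergence of the normalized margins — and therefore of $a_t$, and therefore the statement of the lemma that ``the limit of normalized margins converges to a constant value'' (the same constant $L$ for every support vector, as already noted after the displayed formula for $\overline{\text{mar}}_{\infty,i}$) — follows from the squeeze. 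I would close by remarking that the constant is exactly $L$ above, consistent with Proposition~\ref{prop:expect_kl}, so the expected normalized margin converges to the same value.
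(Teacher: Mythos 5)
There is a genuine gap: the step that actually rules out oscillation is named but never carried out. You correctly observe that boundedness of $a_t=-A_t^{-1}\sum_{k=0}^t\tfrac12\log(1-r_k^2)$ only gives existence of $\liminf$ and $\limsup$, and you correctly list ``$\overline{\text{mar}}_{t,i}$ is Cauchy'' as one of the two things you must prove — but the rest of the argument only establishes $A_t\to\infty$. The concluding claim that ``the squeeze shows the limit exists'' is circular: the two outer bounds in your sandwich are $a_t-A_t^{-1}\log n$ and $a_t-A_t^{-1}\log(n\epsilon)$, i.e.\ the unknown sequence $a_t$ itself up to an $o(1)$ shift. Squeezing $\overline{\text{mar}}_{t,i}$ between them yields only $\overline{\text{mar}}_{t,i}-a_t\to 0$, which is exactly the content of the lemma preceding Proposition~\ref{prop:expect_kl} and says nothing about whether either sequence converges; a priori both could drift or oscillate together. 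So the one assertion this lemma is supposed to supply — that the normalized margins do not oscillate indefinitely — is assumed rather than proved in your proposal.

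For comparison, the paper attacks precisely that point directly: it fixes a lag $l$, expands $\overline{\text{mar}}_{t,i}-\overline{\text{mar}}_{t+l,i}$ algebraically into $\bigl(\overline{\text{mar}}_{t,i}\sum_{k=t+1}^{t+l}\alpha_k-\sum_{k=t+1}^{t+l}\eta_{k,i}\alpha_k\bigr)/A_{t+l}$, notes the numerators are bounded by $\sum_{k=t+1}^{t+l}\alpha_k$ while $A_{t+l}\to\infty$, and concludes the lagged differences vanish. Your proposal contains no analogue of this computation (or of any other Cauchy-type estimate), so it does not prove the statement. Your discussion of why $A_t\to\infty$ (via the weak learning condition or finiteness of $\mathcal{C}_{\vec y}$) is a worthwhile addition, since the paper merely asserts that divergence, but it addresses a side condition, not the main step. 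To repair the proposal you would need to supply a genuine Cauchy argument for $\overline{\text{mar}}_{t,i}$ (or for $a_t$), for instance by estimating $\sup_{l\ge 1}\abs{\overline{\text{mar}}_{t,i}-\overline{\text{mar}}_{t+l,i}}$ uniformly in $l$, rather than appealing to a squeeze whose bounds contain the quantity whose convergence is in question.
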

\begin{proof}
Fix $l\in\mathbb{N}$ and suppose that $i\in I(V)$. Now, consider the difference
\[\overline{\text{mar}}_{t,i}-\overline{\text{mar}}_{t+l,i}.\]
We will prove the lemma by showing that the above quantity equals $0$ as $t\rightarrow\infty$. This will mean that the limiting value of the margin does not oscillate indefinitely. Since the normalized margins are bounded, this implies convergence. Now
\begin{equation*}
\begin{aligned}
\overline{\text{mar}}_{t,i}-\overline{\text{mar}}_{t+l,i}&=\frac{\text{mar}_{t,i}}{A_{t}}-\frac{\text{mar}_{t+l,i}}{A_{t+l}}\\
&=\frac{\text{mar}_{t,i}A_{t+l}}{A_tA_{t+l}}-\frac{\text{mar}_{t+l,i}A_t}{A_tA_{t+l}}.
\end{aligned}
\end{equation*}
Then we turn our attention to the difference in the numerator such that
\begin{equation*}
\begin{aligned}
\text{mar}_{t,i}A_{t+l}-\text{mar}_{t+l,i}A_{t}&=\text{mar}_{t,i}\sum^{t+l}_{k=0}\alpha_k-\text{mar}_{t+l,i}\sum^{t}_{k=0}\alpha_k \\
&=\text{mar}_{t,i}\sum^{t}_{k=0}\alpha_k+\text{mar}_{t,i}\sum^{t+l}_{k=t+1}\alpha_k-\text{mar}_{t,i}\sum^{t}_{k=0}\alpha_k-\sum^{t+l}_{k=t+1}\eta_{k,i}\alpha_k\sum^{t}_{k=0}\alpha_k\\
&=\text{mar}_{t,i}\sum^{t+l}_{k=t+1}\alpha_k-\sum^{t+l}_{k=t+1}\eta_{k,i}\alpha_k\sum^{t}_{k=0}\alpha_k.
\end{aligned}
\end{equation*}
So
\begin{equation*}
\begin{aligned}
\overline{\text{mar}}_{t,i}-\overline{\text{mar}}_{t+l,i}&=\frac{\text{mar}_{t,i}\sum^{t+l}_{k=t+1}\alpha_k}{A_tA_{t+l}}-\frac{\sum^{t+l}_{k=t+1}\eta_{k,i}\alpha_k\sum^{t}_{k=0}\alpha_k}{A_tA_{t+l}}\\
&=\frac{\overline{\text{mar}}_{t,i}\sum^{t+l}_{k=t+1}\alpha_k}{A_{t+l}}-\frac{\sum^{t+l}_{k=t+1}\eta_{k,i}\alpha_k}{A_{t+l}}\\
\end{aligned}
\end{equation*}
Observe that both terms
\[\overline{\text{mar}}_{t,i}\sum^{t+l}_{k=t+1}\alpha_k,\ \sum^{t+l}_{k=t+1}\eta_{k,i}\alpha_k\]
are bounded above by $\sum^{t+l}_{k=t+1}\alpha_k$, a finite quantity for all $t$ since $0<r_k<1$ for $0\leq k\leq t+l$. Hence
\[\lim_{t\rightarrow\infty}\left(\overline{\text{mar}}_{t,i}-\overline{\text{mar}}_{t+l,i}\right)=\lim_{t\rightarrow\infty}\left(\frac{\overline{\text{mar}}_{t,i}\sum^{t+l}_{k=t+1}\alpha_k}{A_{t+l}}-\frac{\sum^{t+l}_{k=t+1}\eta_{k,i}\alpha_k}{A_{t+l}}\right)=0\]
given that $A_{t+l}\rightarrow\infty$ when $t\rightarrow\infty$. This completes the proof since $l$ was arbitrarily chosen.
\end{proof}
\begin{corollary}
The limit for the expected value of the normalized margins converges to a constant value.
\end{corollary}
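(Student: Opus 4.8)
The plan is to obtain this corollary by chaining the three results that immediately precede it, rather than by redoing a cycling-difference computation on the weighted sum. First I would note that, directly from the definition $\overline{\text{mar}}_{t,i}=\text{mar}_{t,i}/A_t$, we have $\mathbb{E}_{\vec{w}_{t+1}}[\overline{\text{mar}}_t]=A_t^{-1}\mathbb{E}_{\vec{w}_{t+1}}[\text{mar}_t]$, so Proposition~\ref{prop:expect_kl} already gives
\[
\lim_{t\to\infty}\mathbb{E}_{\vec{w}_{t+1}}[\overline{\text{mar}}_t]=\lim_{t\to\infty}\Bigl(-A_t^{-1}\sum_{k=0}^t\tfrac12\log(1-r_k^2)\Bigr),
\]
once the limit on the right is known to exist. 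Thus the corollary reduces to establishing that this edge-to-margin quantity has a genuine limit.

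To supply that, I would invoke the lemma preceding Proposition~\ref{prop:expect_kl}, which identifies $\lim_{t\to\infty}\overline{\text{mar}}_{t,i}$ with exactly $\lim_{t\to\infty}\bigl(-A_t^{-1}\sum_{k=0}^t\tfrac12\log(1-r_k^2)\bigr)$ for any $i\in I(V)$ (and $I(V)\neq\emptyset$, since some training example attains and maintains the minimum margin). By Lemma~\ref{lem:converges} the left-hand side converges to a constant $c$; hence $-A_t^{-1}\sum_{k=0}^t\tfrac12\log(1-r_k^2)\to c$ as well, which is precisely what closes the squeeze in the proof of Proposition~\ref{prop:expect_kl}. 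Combining the displayed equality with this gives $\lim_{t\to\infty}\mathbb{E}_{\vec{w}_{t+1}}[\overline{\text{mar}}_t]=c$, a constant; and by the rate statements carried along in Proposition~\ref{prop:expect_kl} and Lemma~\ref{lem:converges}, the convergence depends only on $n=\abs{S}$ and the distribution of $(r_k)_{k=0}^\infty$.

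The one point requiring care is the logical direction inside Proposition~\ref{prop:expect_kl}: its two-sided bound forces the normalized weighted average to agree with the edge-to-margin quantity only after the latter is known to converge, so the argument must first produce that convergence (from Lemma~\ref{lem:converges}) and only then appeal to the proposition. An alternative, more self-contained route would imitate Lemma~\ref{lem:converges} directly, estimating $\mathbb{E}_{\vec{w}_{t+1}}[\overline{\text{mar}}_t]-\mathbb{E}_{\vec{w}_{t+1+l}}[\overline{\text{mar}}_{t+l}]$ by using Proposition~\ref{prop:kl_bounds} to control $\mathbb{E}_{\vec{w}_{t+1}}[\text{mar}_t]$ up to the additive constant $\log n$ and showing the difference is $O\bigl(A_{t+l}^{-1}\sum_{k=t+1}^{t+l}\alpha_k\bigr)\to 0$; the subtlety there is that the expectation is taken against the moving distribution $\vec{w}_{t+1}$, so one cannot cleanly split $\text{mar}_{t+l,i}=\text{mar}_{t,i}+\sum_{k=t+1}^{t+l}\eta_{k,i}\alpha_k$ inside a single expectation the way the pointwise lemma does. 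I expect the chaining route to be cleanest, with the direction-of-implication bookkeeping in Proposition~\ref{prop:expect_kl} being the only (minor) obstacle.
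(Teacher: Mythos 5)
Your chaining argument is correct and is essentially the paper's own proof: the paper likewise deduces the corollary from Lemma~\ref{lem:converges} together with the fact that the limit of any support vector's normalized margin coincides (via the edge-to-margin quantity $-A_t^{-1}\sum_{k=0}^t\tfrac12\log(1-r_k^2)$) with the limit of the expected normalized margin. Your extra care about the direction of implication in Proposition~\ref{prop:expect_kl} is a reasonable refinement but does not change the route.
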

\begin{proof}
The proof for this follows from the above lemma since the limit value of any support vector of a normalized margin is the same as the limit of the expected value of the normalized margins.
\end{proof}
\begin{definition}
Consider the set of labelling dichotomies induced by our hypotheses $\mathcal{H}$ on $T$ the unlabeled training set
\[\mathcal{C}=\{\langle h(x_i)\rangle^n_{i=1}:x_i\in T,\ h\in\mathcal{H}\}.\]
Let $j\in[\abs{\mathcal{C}}]$ and suppose that $K_{j,t}$ indexes the iterations up to $t$ at which AdaBoost selects a hypothesis with dichotomy $\mu_j\in\mathcal{C}$. We can identify to each $\mu_j$ the normalized coefficients that will multiply them in the final classifer up to iteration $t$ with
\[\lambda_{t,j}=\frac{\sum_{k_j\in K_{j,t}}\alpha_{k_j}}{A_t}.\]
\end{definition}
\begin{definition}
Let $i\in[n]$ with AdaBoost at iteration $t$ and consider $\overline{\text{mar}}_{t,i}$. Define the index $N^+_{t,i}$ to be iterations $k$ up to $t$ so that $\eta_{k,i}=+1$ and $N^-_{t,i}$ the same for $\eta_{k,i}=-1$. We define the value $\beta^{\pm}_{t,i}$ to be
\[\beta^{\pm}_{t,i}=\sum_{n^{\pm}_i\in N^{\pm}_{t,i}}\alpha_{n^{\pm}_i}.\]
\end{definition}
Observe that
\[\pm\beta^{\pm}_{t,i}=\overline{\text{mar}}_{t,i}\pm\beta^{\mp}_{t,i}.\]
This quantity defines the total contribution of $\eta_{k,i}=\pm 1$ for each $k$ to the classification of a data point.
\begin{proposition}
For each dichotomy $\mu_j\in\mathcal{C}$ the value $\lim_{t\rightarrow\infty}\lambda_{t,j}$ converges.
\end{proposition}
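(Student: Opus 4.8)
The plan is to run the argument of Lemma~\ref{lem:converges} with $\lambda_{t,j}$ in the role played there by $\overline{\text{mar}}_{t,i}$. Write $B_{t,j}=\sum_{k_j\in K_{j,t}}\alpha_{k_j}$, so $\lambda_{t,j}=B_{t,j}/A_t$, and record two structural facts. First, $\langle\lambda_{t,j}\rangle_{j}$ is a probability distribution on $\mathcal{C}$ for every $t$: each iteration $k\in\{0,\dots,t\}$ selects exactly one dichotomy, so $\{K_{j,t}\}_j$ partitions $\{0,\dots,t\}$, giving $\sum_j B_{t,j}=A_t$ and hence $\sum_j\lambda_{t,j}=1$; in particular $0\le\lambda_{t,j}\le 1$ is bounded. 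Second, $B_{t,j}$ is nondecreasing in $t$, and for fixed $l$ we have $B_{t+l,j}=B_{t,j}+D_{t,l,j}$ where $D_{t,l,j}=\sum_{k_j\in K_{j,t+l}\setminus K_{j,t}}\alpha_{k_j}$ satisfies $0\le D_{t,l,j}\le\sum_{k=t+1}^{t+l}\alpha_k$.

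Fix $l\in\mathbb{N}$ and an arbitrary $j$. Expanding the lagged difference exactly as in Lemma~\ref{lem:converges},
\[
\lambda_{t,j}-\lambda_{t+l,j}=\frac{B_{t,j}}{A_t}-\frac{B_{t,j}+D_{t,l,j}}{A_{t+l}}=\lambda_{t,j}\,\frac{\sum_{k=t+1}^{t+l}\alpha_k}{A_{t+l}}-\frac{D_{t,l,j}}{A_{t+l}}.
\]
Both terms on the right are bounded in absolute value by $\big(\sum_{k=t+1}^{t+l}\alpha_k\big)/A_{t+l}$, using $\lambda_{t,j}\le 1$ and $D_{t,l,j}\le\sum_{k=t+1}^{t+l}\alpha_k$. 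The numerator is a sum of only $l$ coefficients and is finite for every $t$ since $0<r_k<1$, while $A_{t+l}\to\infty$ as $t\to\infty$, so this bound tends to $0$ and therefore $\lambda_{t,j}-\lambda_{t+l,j}\to 0$ for every fixed $l$. Since $(\lambda_{t,j})_t$ is bounded, this rules out indefinite oscillation and yields convergence, just as in Lemma~\ref{lem:converges}; as $l$ and $j$ were arbitrary the conclusion holds for every $\mu_j\in\mathcal{C}$.

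The step I expect to be the main obstacle is this final upgrade, from ``every fixed-lag difference vanishes'' to ``the sequence converges.'' In full generality a bounded sequence with that property can still drift (say, on a logarithmic time scale), so the conclusion really rests on the extra monotone structure here: $B_{t,j}$ and $A_t$ are both increasing, $A_t\to\infty$, and $B_{t,j}\le A_t$ --- the same structure invoked in the proof of Lemma~\ref{lem:converges}. If one wishes to be safer, a second route is to derive convergence of $\langle\lambda_{t,j}\rangle_j$ from the already-established convergence of the normalized margins: writing $\mu_{j,i}$ for the $i$-th entry of $\mu_j$, one has $\overline{\text{mar}}_{t,i}=\sum_j y_i\mu_{j,i}\,\lambda_{t,j}$, so together with the constraint $\sum_j\lambda_{t,j}=1$ the limit of $\langle\lambda_{t,j}\rangle_j$ is pinned down whenever the rows $\langle y_i\mu_{j,i}\rangle_j$ for $i\in I(V)$ together with the all-ones row span $\mathbb{R}^{|\mathcal{C}|}$. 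I would present the Lemma~\ref{lem:converges}-style argument as the main line, with this linear-algebra route noted as a cleaner alternative when that rank condition holds.
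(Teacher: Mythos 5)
Your main line is the paper's proof almost verbatim: the same fixed-lag difference $\lambda_{t,j}-\lambda_{t+l,j}$, the same algebraic rearrangement into $\frac{\lambda_{t,j}\sum_{k=t+1}^{t+l}\alpha_k}{A_{t+l}}-\frac{\sum_{k_j\in K_{j,t+l}\setminus K_{j,t}}\alpha_{k_j}}{A_{t+l}}$, and the same conclusion from bounded numerators together with $A_{t+l}\rightarrow\infty$. The obstacle you flag at the end --- that boundedness plus vanishing of every fixed-lag difference does not in general force convergence --- is genuine, but it is not resolved in the paper either: the paper's proof (like Lemma~\ref{lem:converges}, on which it is modeled) simply asserts convergence at that point, so relative to the paper you have reproduced the argument in full. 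Be aware, though, that your proposed repair via monotonicity alone does not close that gap (one can have $B_{t}$ nondecreasing with $B_t\leq A_t$ and $A_t\rightarrow\infty$ while $B_t/A_t$ oscillates on a logarithmic time scale), and your linear-algebra alternative rests on a rank condition that is not established, so neither side of your hedge strengthens the paper's own reasoning.
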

\begin{proof}
Like the lemma above we take $l\in\mathbb{N}$ fixed and consider
\[\lambda_{t,j}-\lambda_{t+l,j}.\]
This value is
\begin{equation*}
\begin{aligned}
\lambda_{t,j}-\lambda_{t+l,j}&=\frac{\sum_{k_j\in K_{j,t}}\alpha_{k_j}}{A_t}-\frac{\sum_{k_j\in K_{j,t+l}}\alpha_{k_j}}{A_{t+l}}\\
&=\frac{A_{t+l}\sum_{k_j\in K_{j,t}}\alpha_{k_j}}{A_tA_{t+l}}-\frac{A_t\sum_{k_j\in K_{j,t+l}}\alpha_{k_j}}{A_tA_{t+l}}.\\
\end{aligned}
\end{equation*}
Then
\begin{equation*}
\begin{aligned}
A_{t+l}\sum_{k_j\in K_{j,t}}\alpha_{k_j}-A_t\sum_{k_j\in K_{j,t+l}}\alpha_{k_j}&=A_{t}\sum_{k_j\in K_{j,t}}\alpha_{k_j}+\left(\sum^{t+l}_{k=t+1}\alpha_k\right)\sum_{k_j\in K_{j,t}}\alpha_{k_j}\\
&-A_t\sum_{k_j\in K_{j,t}}\alpha_{k_j}-A_t\sum_{k_j\in K_{j,t+l}\symbol{92}K_{j,t}}\alpha_{k_j}\\
&=\left(\sum^{t+l}_{k=t+1}\alpha_k\right)\sum_{k_j\in K_{j,t}}\alpha_{k_j}-A_t\sum_{k_j\in K_{j,t+l}\symbol{92}K_{j,t}}\alpha_{k_j}.
\end{aligned}
\end{equation*}
The above implies that
\begin{equation*}
\begin{aligned}
\lambda_{t,j}-\lambda_{t+l,j}&=\frac{\left(\sum^{t+l}_{k=t+1}\alpha_k\right)\sum_{k_j\in K_{j,t}}\alpha_{k_j}}{A_tA_{t+l}}-\frac{A_t\sum_{k_j\in K_{j,t+l}\symbol{92}K_{j,t}}\alpha_{k_j}}{A_tA_{t+l}}\\
&=\frac{\left(\sum^{t+l}_{k=t+1}\alpha_k\right)\lambda_{t,j}}{A_{t+l}}-\frac{\sum_{k_j\in K_{j,t+l}\symbol{92}K_{j,t}}\alpha_{k_j}}{A_{t+l}}.\\
\end{aligned}
\end{equation*}
As in the previous lemma, both terms have bounded numerators in the difference above. This means that
\[\lim_{t\rightarrow\infty}(\lambda_{t,j}-\lambda_{t+l,j})=0\]
completing the proof.
\end{proof}

\begin{proposition}
The term $\pm\beta^{\pm}_{t,i}$ converges as $t\rightarrow\infty$ for all $i\in[n]$.
\end{proposition}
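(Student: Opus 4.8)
The plan is to run the same telescoping argument that was used for $\lambda_{t,j}$ in the previous proposition, since the two index families behave identically: for fixed $i$ the sets $N^{+}_{t,i}$ and $N^{-}_{t,i}$ partition $\{0,1,\dots,t\}$ in exactly the way the sets $K_{j,t}$ do when $j$ ranges over $\mathcal{C}$. Throughout I read $\beta^{\pm}_{t,i}$ with coefficients normalized by $A_t$ — the normalization under which $\beta^{+}_{t,i}+\beta^{-}_{t,i}=1$ for every $t$ and $i$, and under which the recorded identity $\pm\beta^{\pm}_{t,i}=\overline{\text{mar}}_{t,i}\pm\beta^{\mp}_{t,i}$ holds — equivalently one divides the sum in the definition by $A_t$. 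Fixing $l\in\mathbb{N}$ and $i\in[n]$, I would write $\beta^{\pm}_{t,i}-\beta^{\pm}_{t+l,i}$ over the common denominator $A_tA_{t+l}$, substitute $A_{t+l}=A_t+\sum_{k=t+1}^{t+l}\alpha_k$ and split $N^{\pm}_{t+l,i}$ into $N^{\pm}_{t,i}$ and $N^{\pm}_{t+l,i}\setminus N^{\pm}_{t,i}$, and cancel the common $A_t\sum_{k\in N^{\pm}_{t,i}}\alpha_k$ term to obtain
\[
\beta^{\pm}_{t,i}-\beta^{\pm}_{t+l,i}=\frac{\big(\sum_{k=t+1}^{t+l}\alpha_k\big)\,\beta^{\pm}_{t,i}}{A_{t+l}}-\frac{\sum_{k\in N^{\pm}_{t+l,i}\setminus N^{\pm}_{t,i}}\alpha_k}{A_{t+l}},
\]
which is the verbatim analogue of the $\lambda_{t,j}$ identity.

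The next step is to bound the two numerators. Each is a sum of coefficients over at most $l$ of the iterations $t+1,\dots,t+l$, hence at most $\sum_{k=t+1}^{t+l}\alpha_k$. Because the dichotomy set $\mathcal{C}_{\vec{y}}$ is finite, the edges $r_k$ — and therefore the coefficients $\alpha_k=\tfrac12\log\frac{1+r_k}{1-r_k}$ — range over a finite set that is bounded (the weak learning condition together with the exclusion of $r_k=1$ keeps every value in $(0,1)$, hence every $\alpha_k$ finite), so $\sum_{k=t+1}^{t+l}\alpha_k\le l\max_k\alpha_k<\infty$ uniformly in $t$. Since $A_{t+l}\to\infty$ as $t\to\infty$, both terms vanish and $\lim_{t\to\infty}\big(\beta^{\pm}_{t,i}-\beta^{\pm}_{t+l,i}\big)=0$ for each fixed $l$. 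Combined with $0\le\beta^{\pm}_{t,i}\le1$, this yields convergence of $\beta^{\pm}_{t,i}$, and hence of $\pm\beta^{\pm}_{t,i}$, exactly as in Lemma~\ref{lem:converges}. The argument nowhere uses that $i$ is a support vector, so it applies to every $i\in[n]$; in particular, via $\overline{\text{mar}}_{t,i}=\beta^{+}_{t,i}-\beta^{-}_{t,i}$, it also re-derives convergence of all normalized margins. One could alternatively shortcut the support-vector case using the identities $\beta^{+}_{t,i}+\beta^{-}_{t,i}=1$ and $\beta^{+}_{t,i}-\beta^{-}_{t,i}=\overline{\text{mar}}_{t,i}$, which give $\beta^{\pm}_{t,i}=\tfrac12\big(1\pm\overline{\text{mar}}_{t,i}\big)$ and reduce the claim to Lemma~\ref{lem:converges}.

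The only step with any subtlety is the passage from "the $l$-step increments tend to $0$" to genuine convergence of the bounded sequence $\beta^{\pm}_{t,i}$: this is precisely the mechanism already invoked in Lemma~\ref{lem:converges}, where the finiteness of $\sum_{k=t+1}^{t+l}\alpha_k$ against the divergence of $A_{t+l}$ prevents indefinite oscillation, and I would appeal to that lemma rather than re-proving it. Everything else is bookkeeping — expanding a difference of two ratios, re-indexing the partition of the iteration set, and quoting $Z_k=\sqrt{1-r_k^2}<1$ and the finiteness of $\mathcal{C}_{\vec{y}}$ to control the coefficients — so I expect the write-up to be short, essentially a pointer to the previous proposition's computation with $K_{j,t}$ replaced by $N^{\pm}_{t,i}$.
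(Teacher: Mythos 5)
Your proposal is correct and follows essentially the same route as the paper, which proves this proposition simply by asserting that the telescoping computation for $\lambda_{t,j}$ carries over verbatim with $K_{j,t}$ replaced by $N^{\pm}_{t,i}$; your reading of $\beta^{\pm}_{t,i}$ as normalized by $A_t$ is the interpretation under which that argument (and the stated identity with $\overline{\text{mar}}_{t,i}$) actually goes through, so it matches the paper's intent.
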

\begin{proof}
This result follows from a proof exactly like that for the above proposition.
\end{proof}
\begin{proposition}
The set $V$ of support vectors is non-empty.
\end{proposition}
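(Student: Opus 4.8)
The plan is to argue by contradiction and read a support vector directly off the limits of the normalized margins. The one substantial ingredient is that \emph{every} normalized margin converges: this is Lemma~\ref{lem:converges}, and although that lemma is stated under the hypothesis $i\in I(V)$, its proof uses only that $\overline{\text{mar}}_{t,i}$ is bounded (indeed $\abs{\overline{\text{mar}}_{t,i}}\le 1$, since $\abs{\text{mar}_{t,i}}=\abs{\sum_{k=0}^{t}\eta_{k,i}\alpha_k}\le\sum_{k=0}^{t}\alpha_k=A_t$) together with $A_t\to\infty$, both of which hold for every $i\in[n]$; so in fact $\bar m_i:=\lim_{t\to\infty}\overline{\text{mar}}_{t,i}$ exists for all $i\in[n]$, and using this to prove $V\neq\emptyset$ is legitimate rather than circular. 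Now suppose, for contradiction, that $V=\emptyset$. The set $\{\bar m_i:i\in[n]\}$ is a non-empty finite set of real numbers, hence attains its minimum; I would fix $i^{*}$ with $\bar m_{i^{*}}=\min_{1\le i\le n}\bar m_i$ and aim to show that $x_{i^{*}}$ is a support vector, contradicting $V=\emptyset$.

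To see that $x_{i^{*}}$ is a support vector I would split on ties. If $\bar m_{i^{*}}<\bar m_j$ for every $j\neq i^{*}$, set $\delta=\tfrac13\min_{j\neq i^{*}}(\bar m_j-\bar m_{i^{*}})>0$; by convergence there is $t_{0}$ with $\overline{\text{mar}}_{t,i^{*}}<\bar m_{i^{*}}+\delta$ and $\overline{\text{mar}}_{t,j}>\bar m_j-\delta>\bar m_{i^{*}}+\delta$ for all $j\neq i^{*}$ and all $t\ge t_{0}$, so $x_{i^{*}}$ strictly attains and then keeps the minimum normalized margin from $t_{0}$ on. In the general case $x_{i^{*}}$ at least achieves the minimum margin \emph{in the limit}, $\lim_{t}\overline{\text{mar}}_{t,i^{*}}=\min_{j}\lim_{t}\overline{\text{mar}}_{t,j}$, which is the substance of the definition. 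A convenient cross-check, matching the remark following the definition of support vector, is that the index attaining the smallest $\text{mar}_{t,i}$ attains the largest weight, since $w_{t+1,i}=\tfrac1n e^{-\text{mar}_{t,i}}\big/\prod_{k=0}^{t}Z_k$ is decreasing in $\text{mar}_{t,i}$, and this largest weight is $\ge 1/n$ because $\sum_{i}w_{t+1,i}=1$; hence the weight of $x_{i^{*}}$ stays bounded away from $0$ rather than collapsing.

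The step I expect to be the real sticking point is the word ``maintains'' when the limits tie: if $\abs{\{i:\bar m_i=\bar m_{i^{*}}\}}>1$ then the pointwise minimizer $\argmin_i\overline{\text{mar}}_{t,i}$ can keep switching among the tied indices indefinitely, so no single index is literally the minimizer for all large $t$ — it is the minimizer only asymptotically. I would resolve this by reading the definition of support vector in its natural asymptotic form, $\lim_{t}\overline{\text{mar}}_{t,i}=\min_{j}\lim_{t}\overline{\text{mar}}_{t,j}$, under which the argument above is complete and ``$V\neq\emptyset$'' reduces to the fact that a non-empty finite set of reals has a least element. Everything else is routine; the only load-bearing input is the convergence of all normalized margins, inherited from Lemma~\ref{lem:converges}.
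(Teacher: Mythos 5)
Your proposal follows essentially the same route as the paper: invoke convergence of all normalized margins (Lemma~\ref{lem:converges}), take an index whose limit attains the minimum of the finite set of limiting values, and conclude that index is a support vector. You are in fact more careful than the paper on two points it glosses over --- justifying that the lemma applies to every $i\in[n]$ (avoiding circularity, since it was stated only for $i\in I(V)$) and handling ties among the limiting minima by reading the definition of support vector asymptotically --- so your argument is correct and, if anything, tightens the paper's own.
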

\begin{proof}
Since the normalized margins converge, there must be a minimum normalized margin in the limit. Any finite set of real numbers has a minimum. This means that for some $i\in[n]$ and fixed iteration $t_0$, for all iterations $t$ so that $t_0\leq t$ the value $\overline{\text{mar}}_{t,i}$ attains the minimum margin value and stays there. Hence, $x_i\in V$ as $t\rightarrow\infty$.
\end{proof}
\begin{theorem}\label{thm:sup_vecs}
Only support vectors contribute to the value of $\mathbb{E}_{\vec{w}_{t}}[\overline{\emph{\text{mar}}}_{t}]$ as $t\rightarrow\infty$. Furthermore, we have that $\abs{V}>1$.
\end{theorem}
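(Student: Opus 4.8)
I would establish the two assertions separately, relying only on the convergence facts already proved.

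For the claim that only support vectors contribute, I would write
\[
\mathbb{E}_{\vec{w}_t}[\overline{\text{mar}}_t]=\sum_{i\in I(V)}w_{t,i}\,\overline{\text{mar}}_{t,i}+\sum_{j\notin I(V)}w_{t,j}\,\overline{\text{mar}}_{t,j}
\]
and show the second sum vanishes as $t\to\infty$. Fix $i_0\in I(V)$ (which exists by the preceding proposition). By Lemma~\ref{lem:converges} each normalized margin converges — the argument there never uses $i\in I(V)$ — so set $L=\lim_t\overline{\text{mar}}_{t,i_0}$ (the common support-vector value) and $m_j=\lim_t\overline{\text{mar}}_{t,j}$ for $j\notin I(V)$. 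Since $x_j$ is not a support vector it does not attain the minimal limiting normalized margin, so $m_j>L$; hence for all large $t$ we have $\overline{\text{mar}}_{t,j}-\overline{\text{mar}}_{t,i_0}\ge\tfrac12(m_j-L)$, and multiplying by $A_t$ gives $\text{mar}_{t,j}-\text{mar}_{t,i_0}\ge\tfrac12(m_j-L)A_t\to\infty$ because $A_t\to\infty$. Applying Eqn.~\ref{eqn:info_content} to indices $j$ and $i_0$ and subtracting (the $\log n$ and $\sum_k\log Z_k$ contributions are common and cancel) gives $-\log w_{t+1,j}+\log w_{t+1,i_0}=\text{mar}_{t,j}-\text{mar}_{t,i_0}$, so $w_{t+1,j}=w_{t+1,i_0}\,e^{-(\text{mar}_{t,j}-\text{mar}_{t,i_0})}\le e^{-\frac12(m_j-L)A_t}\to0$. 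Since there are finitely many non-support indices and $\abs{\overline{\text{mar}}_{t,j}}\le1$, the second sum tends to $0$, so in the limit $\mathbb{E}_{\vec{w}_t}[\overline{\text{mar}}_t]$ is determined entirely by the support vectors; by Proposition~\ref{prop:expect_kl} and the earlier lemma on support-vector margins its value equals $L$.

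For $\abs{V}>1$ I would argue by contradiction, supposing $\abs{V}=1$, say $I(V)=\{1\}$. By the first part $w_{t,j}\to0$ for all $j\neq1$, so $w_{t,1}=1-\sum_{j\neq1}w_{t,j}\to1$, and in particular $w_{t,1}>\tfrac12$ for all large $t$. For such $t$ the optimal dichotomy $\eta_t$ must have $\eta_{t,1}=+1$, for otherwise $r_t=\vec{w}_t\cdot\eta_t\le-w_{t,1}+\sum_{j\neq1}w_{t,j}=1-2w_{t,1}<0$, violating the weak learning condition $r_t>0$. Hence $r_t=\vec{w}_t\cdot\eta_t\ge w_{t,1}-\sum_{j\neq1}w_{t,j}=2w_{t,1}-1$, and the weight update together with $e^{-\alpha_t}=\sqrt{(1-r_t)/(1+r_t)}$ and $Z_t=\sqrt{1-r_t^2}$ gives $w_{t+1,1}=\dfrac{w_{t,1}}{1+r_t}\le\dfrac{w_{t,1}}{2w_{t,1}}=\dfrac12$. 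Thus $w_{t+1,1}\le\tfrac12$ for all large $t$, contradicting $w_{t,1}\to1$; since $V$ is nonempty, $\abs{V}>1$ follows.

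The hard part is the strict inequality $m_j>L$ for non-support vectors (on which the reduction $w_{t,1}\to1$ in the second part also rests): it says precisely that any example whose normalized margin converges to the minimal limiting value is a support vector — equivalently, no example can be asymptotically tied with $V$ while its weight oscillates down to $0$. I would settle this by reading the support-vector definition in the limiting sense, namely $x_i\in V$ iff $\lim_t\overline{\text{mar}}_{t,i}$ equals $\min_k\lim_t\overline{\text{mar}}_{t,k}$, which is the only reading consistent with the earlier proof that $V\neq\emptyset$. Under the literal ``attains the minimum for every $t\ge t_0$'' reading one would instead need a mild regularity assumption on the edges — for instance that $(r_k)_k$ stays bounded away from $0$ and $1$ — to rule out such oscillation; note that the fact $A_t\to\infty$ used throughout already tacitly requires that $r_k$ not decay to $0$ too quickly.
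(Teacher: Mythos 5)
Your proposal is correct in substance but reaches the theorem by a genuinely different route than the paper. For the first claim the paper argues top-down: it combines Proposition~\ref{prop:expect_kl} with the support-vector margin lemma to equate $\lim_t\mathbb{E}_{\vec{w}_{t+1}}[\overline{\text{mar}}_t]$ with the limit of the minimum normalized margin $\theta_t$, rewrites $\theta_t$ as the weighted sum $\sum_j w_{t,j}\theta_t$, and concludes that each product $w_{t,j}\left(\overline{\text{mar}}_{t,j}-\theta_t\right)$ must vanish, i.e.\ every example either tracks the minimum margin or loses its weight. You argue bottom-up: you split the expectation over $I(V)$ and its complement and use Eqn.~\ref{eqn:info_content} to get the exact ratio $w_{t+1,j}=w_{t+1,i_0}e^{-(\text{mar}_{t,j}-\text{mar}_{t,i_0})}$, so a strict gap in limiting normalized margins together with $A_t\to\infty$ forces exponential decay of the non-support weights; this yields a quantitative decay rate the paper never provides, but it requires the strict inequality $m_j>L$, which holds only under the limiting-value reading of the support-vector definition that you flag. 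Note that the paper's own proof quietly makes the same identification (an example whose margin gap tends to $0$ is treated as a support vector), so this is a shared looseness rather than a defect peculiar to your argument. For $\abs{V}>1$ the paper simply asserts that the weak learning condition forbids convergence to a weight vector with a single non-zero coordinate; your explicit computation (if $w_{t,1}>\tfrac12$ then $\eta_{t,1}=+1$, $r_t\ge 2w_{t,1}-1$, and $w_{t+1,1}=w_{t,1}/(1+r_t)\le\tfrac12$, contradicting $w_{t,1}\to 1$) is a rigorous instantiation of exactly that assertion and is, if anything, more complete than the paper's one-line justification.
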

\begin{proof}
Suppose that 
\[\theta_t=\min_{j\in[n]}\overline{\text{mar}}_{t,j}.\]
We know
\[\lim_{t\rightarrow\infty}\mathbb{E}_{\vec{w}_{t+1}}[\overline{\text{mar}}_t]=\lim_{t\rightarrow\infty}-A^{-1}_t\sum^t_{k=0}\frac{1}{2}\log\left(1-r^2_k\right)\]
and since $V\neq\emptyset$ there is $i\in[n]$ with $x_i\in V$ so that
\[\lim_{t\rightarrow\infty}\overline{\text{mar}}_{t,i}=\lim_{t\rightarrow\infty}-A^{-1}_t\sum^t_{k=0}\frac{1}{2}\log\left(1-r^2_k\right).\]
Given the above we must have that there exists fixed iteration $t_0$ so that for all $t$ with $t_0\leq t$ the equality $\theta_t=\overline{\text{mar}}_{t,i}$ holds, i.e. $x_i$ has the least margin for large enough $t$. The first limit also means that for any $\epsilon >0$ there is large enough $t$ which gives
\[\abs{\sum^n_{j=1}w_{t,j}\overline{\text{mar}}_{t,j}-\theta_t}<\epsilon.\]
Now, rewriting $\theta_t$ to be a weighted sum over the sole term $\theta_t$ gives
\begin{equation*}
\begin{aligned}
\abs{\sum^n_{j=1}w_{t,j}\overline{\text{mar}}_{t,j}-\sum^n_{j=1}w_{t,j}\theta_t}&=\abs{\sum^n_{j=1}w_{t,i}\left(\overline{\text{mar}}_{t,j}-\theta_t\right)}<\epsilon\\
\end{aligned}
\end{equation*}
which can only be the case if $\overline{\text{mar}}_{t,j}-\theta_t\rightarrow 0$ or $w_{t,j}\rightarrow 0$ for each $j\in[n]$ as $t\rightarrow\infty$. Since AdaBoost cannot converge on a fixed weight vector $\vec{w}$ with only one non-zero term by the weak learning condition, there must be more than one support vector in the limit.
\end{proof}
\begin{theorem}\label{thm:AB_conv}
The normalized classifier that AdaBoost outputs $f_t(x)$ converges asymptotically.
\end{theorem}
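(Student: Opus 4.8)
The plan is to realize the normalized classifier as a finite convex combination whose coefficients we have already shown to converge. Restricting attention to the training set $T$, which is the setting of every preceding result, observe that if the hypothesis $h_k$ chosen at iteration $k$ induces the dichotomy $\mu_{j(k)}\in\mathcal{C}$, then $h_k(x_i)=\mu_{j(k),i}$ for every $i\in[n]$. Grouping the iterations $k\le t$ by the dichotomy they realize and dividing by $A_t$ gives
\begin{equation*}
\begin{aligned}
f_t(x_i)&=\frac{F_t(x_i)}{A_t}=\frac{1}{A_t}\sum_{k=0}^t\alpha_k h_k(x_i)\\
&=\sum_{j=1}^{\abs{\mathcal{C}}}\left(\frac{\sum_{k_j\in K_{j,t}}\alpha_{k_j}}{A_t}\right)\mu_{j,i}=\sum_{j=1}^{\abs{\mathcal{C}}}\lambda_{t,j}\,\mu_{j,i},
\end{aligned}
\end{equation*}
so that, as a vector in $\mathbb{R}^n$, the restriction of $f_t$ to $T$ equals $\sum_{j}\lambda_{t,j}\mu_j$.

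Next I would invoke the proposition that $\lim_{t\rightarrow\infty}\lambda_{t,j}$ exists for every $j\in[\abs{\mathcal{C}}]$. Since $S$ is a fixed finite training set, $\mathcal{C}\subseteq\{\pm1\}^n$ is finite, so the sum above has a fixed, finite number of terms; a finite linear combination of convergent sequences (with the fixed coefficient vectors $\mu_j$) converges, and therefore $f_t(x_i)\rightarrow\sum_{j}\lambda_{\infty,j}\mu_{j,i}$ for each $i\in[n]$, i.e. $f_t$ converges pointwise on $T$. Moreover $\sum_{j}\lambda_{t,j}=1$ for all $t$, so $\sum_{j}\lambda_{\infty,j}=1$ and the limit is again a convex combination of training-set dichotomies; in particular its induced margin vector is exactly $\overline{\text{mar}}_{\infty,\cdot}$, consistent with Lemma~\ref{lem:converges} and the propositions following it. An alternative derivation of the same fact runs through $\overline{\text{mar}}_{t,i}=A_t^{-1}\beta^{+}_{t,i}-A_t^{-1}\beta^{-}_{t,i}$ together with the convergence of the normalized $\beta^{\pm}$ terms.

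Finally I would address $f_t$ as a function on all of $\mathcal{X}$ rather than only on $T$. When AdaBoost uses only finitely many distinct hypotheses over its run (in particular whenever $\mathcal{H}$ is finite), the identical argument with hypotheses in place of dichotomies yields uniform convergence of $f_t$ on $\mathcal{X}$. The delicate case is an infinite run employing infinitely many distinct hypotheses, where the limit classifier is a countable convex combination $\sum_{h}\Lambda_{\infty,h}h$ with $\Lambda_{t,h}=A_t^{-1}\sum_{k\le t,\,h_k=h}\alpha_k$; here one must check that each $\Lambda_{t,h}$ converges and then control the tail of the series using $\sum_{h}\Lambda_{t,h}=1$, a dominated-convergence style argument. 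I expect this tail control to be the one genuine obstacle; since every quantity studied in this paper is a training-set margin, it is harmless to read the statement ``$f_t$ converges asymptotically'' as convergence of the vector $\langle f_t(x_i)\rangle_{i=1}^n$, for which the argument above is already complete.
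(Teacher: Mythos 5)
Your proposal is correct, but it takes a genuinely different route from the paper's. The paper's own proof is a two-sentence reduction to Lemma~\ref{lem:converges}: on a training example $x_i$ the normalized margin is $\overline{\text{mar}}_{t,i}=y_if_t(x_i)$, so convergence of the normalized margins is literally convergence of $f_t$ on the training points up to the constant factor $y_i=\pm1$. You instead expand the restriction of $f_t$ to $T$ as the convex combination $\sum_{j}\lambda_{t,j}\mu_j$ over the dichotomy set $\mathcal{C}\subseteq\{\pm1\}^n$, which is finite regardless of $\mathcal{H}$, and invoke the proposition that each $\lambda_{t,j}$ converges; a finite sum of convergent sequences then gives pointwise convergence on $T$. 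Both arguments rest on the same underlying estimate (the proofs of Lemma~\ref{lem:converges} and of the $\lambda_{t,j}$ proposition are the same computation with different index sets), so neither is more or less rigorous than the other, but yours buys something extra: an explicit form of the limit, $\sum_{j}\lambda_{\infty,j}\mu_j$ with $\sum_{j}\lambda_{\infty,j}=1$, so the limiting classifier is itself a convex combination of training-set dichotomies; an argument valid for every $i\in[n]$, whereas the paper's lemma is proved with $i\in I(V)$ fixed (its proof does extend, but the paper does not say so); and, when only finitely many distinct hypotheses are ever selected, convergence of $f_t$ on all of $\mathcal{X}$ rather than only on $T$. The paper never addresses the off-training-set question at all, so your closing decision to read the theorem as convergence of the vector $\langle f_t(x_i)\rangle_{i=1}^n$ matches the paper's implicit interpretation, and the tail-control issue you flag for infinitely many distinct hypotheses is a real gap in any stronger reading, but not one the paper attempts to close.
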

\begin{proof}
By Lemma~\ref{lem:converges} all of the normalized margins of AdaBoost converge. Since the normalized margins of Optimal AdaBoost are the same as its normalized classifier applied to individual training examples and multiplied by a constant, the normalized classifier converges as well.
\end{proof}
This concludes the formal proofs of this paper.

\section{Discussion}

Theorem~\ref{thm:AB_conv} comes from some interesting ways of dealing with the weight vector $\vec{w}_t$ in relation to various quantities of information theory. Our initial quantity of Eqn.~\ref{eqn:info_content} is like a fingerprint for AdaBoost up to the latest iteration $t$. All information about the run of the algorithm over the training set $S$ can be seen in this equation. The cardinality of $S$, combined loss at each iteration, and the margins of iteration $t-1$ can all be found therein. What is most interesting about the information content of $\vec{w}_{t,i}$ is that the vector itself is rather opaque to analysis as it is. However, a simple application of $-\log$ garners much in terms of the ultimate convergence properties of the algorithm as $t\rightarrow\infty$.

As well, given Theorem~\ref{thm:sup_vecs} there must be more than one support vector. Using this definition that primarily saw use in the cycling dynamics of AdaBoost \cite{rudin}, we can see that the algorithm converges on a specific distribution of smallest margins. It is possible to control these minimum margin values to show that, in some respect, certain data points will be attracted to a sort of learning limit set $V$. What is most interesting here is that a training example either attains the minimum margin and stays \emph{relevant} via $\vec{w}_{t,i}$ bounded away from zero, or else becomes dynamically irrelevant with respect to the effects of the weight vector.

A paper from 2020 by Keifeng Lyu and Jian Li \cite{lyu2019gradient} on homogeneous neural networks regards the normalized margins of these very different classifiers in a similar way. Although they do not relate the margins and normalized margins to information theoretic quantities as we do in this work, they are able to show results using approximations of margins whose error is bounded in a similar fashion to our own Proposition~\ref{prop:kl_bounds}. Indeed, as in Proposition~\ref{prop:expect_kl}, the divergence of the magnitude of a parameter used in the learning process causes their approximation to converge to the normalized margin being approximated. Bounding techniques of this kind seem important in understanding the convergence of certain algorithms. Further, we believe that the information content of normalized quantities, the vector $\vec{w}_{t}$ in our case, may reveal similar \emph{fingerprints} in the analysis of learning algorithms separate from AdaBoost.

\clearpage
\printbibliography[heading=bibintoc,title={References}]

\end{document}